\relax
\documentclass[letterpaper]{article} 
\usepackage{aaai21}  
\usepackage{times}  
\usepackage{helvet} 
\usepackage{courier}  
\usepackage[hyphens]{url}  
\usepackage{graphicx} 
\urlstyle{rm} 
\usepackage{natbib}  
\usepackage{caption} 
\frenchspacing  
\setlength{\pdfpagewidth}{8.5in}  
\setlength{\pdfpageheight}{11in}  
\pdfinfo{
/Title (AAAI Press Formatting Instructions for Authors Using LaTeX -- A Guide)
/Author (AAAI Press Staff, Pater Patel Schneider, Sunil Issar, J. Scott Penberthy, George Ferguson, Hans Guesgen, Francisco Cruz, Marc Pujol-Gonzalez)
/TemplateVersion (2021.2)
} 

\usepackage[utf8]{inputenc} 
\usepackage[T1]{fontenc}    
\usepackage{booktabs}       
\usepackage{amsfonts,amsmath}       
\usepackage{nicefrac}       
\usepackage{microtype}      
\usepackage[inline]{enumitem}
\usepackage{graphicx}
\usepackage{subcaption}
\usepackage{bbm}
\usepackage{booktabs}
\usepackage{booktabs,siunitx,caption}
\captionsetup{skip=0.5\baselineskip}
\usepackage{amsthm}
\usepackage{amsmath}
\newtheorem{definition}{Definition}
\usepackage{mathtools}
\mathtoolsset{showonlyrefs}
\usepackage{enumitem}    
\usepackage{upgreek}
\usepackage{comment}

\usepackage{xcolor}
\usepackage{paralist}

\newcommand{\todo}[1]{}
\renewcommand{\todo}[1]{{\color{red} TODO: {#1}}}

\newcommand{\R}{\mathbb{R}}

\newtheorem{theorem}{Theorem}
\newtheorem{lemma}[theorem]{Lemma}

\newcounter{rem}
\newcounter{obs}
\newtheorem{remark}[rem]{Remark}
\newtheorem{observation}[obs]{Observation}
\newcommand{\mmd}{\text{MMD}}
\newcommand{\icl}{\text{ICL}}
\newcommand{\kl}{\text{KL}}
\newcommand{\rom}[1]{\text{\MakeUppercase{\romannumeral #1}}}

\makeatletter

\def\env@sqcases{%
	\let\@ifnextchar\new@ifnextchar
	\left\lbrack
	\def\arraystretch{1.2}%
	\array{@{}l@{\quad}l@{}}%
}
\makeatother

\setcounter{secnumdepth}{2} 

\title{Learning Invariant Representations using Inverse Contrastive Loss}
\author {
    Aditya Kumar Akash\textsuperscript{\rm 1} $\quad$
    Vishnu Suresh Lokhande\textsuperscript{\rm 1}  $\quad$
    Sathya N. Ravi\textsuperscript{\rm 2}  $\quad$
    Vikas Singh\textsuperscript{\rm 1} \\
}
\affiliations {
    \textsuperscript{\rm 1} University of Wisconsin-Madison $\quad$
    \textsuperscript{\rm 2} University of Illinois at Chicago \\
    aakash@wisc.edu, lokhande@cs.wisc.edu, sathya@uic.edu, vsingh@biostat.wisc.edu
}
\begin{document}

\maketitle

\begin{abstract}
  Learning invariant representations is a critical first step in a number of machine learning tasks. A common approach
  corresponds to the so-called information bottleneck principle in which an application dependent function of mutual information is carefully chosen and optimized. Unfortunately, in practice, these functions are not suitable for optimization purposes since these losses are agnostic of the metric structure of the parameters of the model. We introduce a class of losses for learning representations that are invariant to some extraneous variable of interest by inverting the class of contrastive losses, i.e., inverse contrastive loss (ICL). We show that if the extraneous variable is binary, then optimizing ICL is equivalent to optimizing a regularized MMD divergence. More generally, we also show that if we are provided a metric on the sample space, our formulation of ICL can be decomposed into a sum of convex functions of the given distance metric. Our experimental results indicate that models obtained by optimizing ICL achieve significantly better invariance to the extraneous variable for a fixed desired level of accuracy. In a variety of experimental settings, we show applicability of ICL for learning invariant representations for both continuous and discrete extraneous variables. {The project page with code is available at \textit{https://github.com/adityakumarakash/ICL}}
\end{abstract}

\section{Introduction}
\label{section:introduction}
Removing or controlling for the influence of certain 
observed or unobserved extraneous variables, that may have
an unintended effect on a learning task, is often a critical
step in model estimation \cite{xie2017controllable}. 
Often, we want to
explicitly control for their influence on the response variable, 
and estimate model coefficients that
are, roughly speaking, immune to one or more confounding
factors \cite{wasserman2013all}. 
These tasks involve
understanding invariance properties of data representations
and/or parameters of the model we wish to learn. 
While mechanisms to control for
extraneous variables are not strictly necessary in typical supervised 
learning tasks, where one focuses on predictive accuracy, 
over the last few years, many results have indicated 
how it can be quite useful \cite{lokhande2020fairalm}. 
For instance, 
controlling the influence of a protected attribute such as race or gender on a
response variable such as credit worthiness 
enables the design of
fair machine learning models \cite{donini2018empirical}.
Invariance is also relevant in domain adaptation 
when analyzing data from multiple sources or sites. 
Representations that are invariant
to the categorical variable (e.g., which identifies the site) leads to models 
that are more immune (or less biased) to 
{site-specific artifacts 
\cite{Zhou1481}}.
While invariance
is prominent in a number of other settings \cite{baktashmotlagh2013unsupervised, moyer2019scanner},
we will focus on
learning representations that are minimally
informative of such extraneous variables
yet preserve enough information to reliably predict
the response/target variable or label.  

{\bf Related works.} Classical regression analysis techniques for 
handling extraneous variables based on residual scores and ANOVA \cite{girden1992anova} are not easily 
applicable for deep neural networks. 
Instead, one approaches the question in one of two ways.
A common approach is to use a standalone
adversarial module \cite{xie2017controllable} tasked with
using the latent representations of the data to predict the
extraneous variable whose influence
(on the representations) we wish to remove.
If the adversary succeeds, we have not yet fully controlled for
the extraneous variable and so, the representations
must be modified.
This necessitates the design of an adversary tailored to 
the form of the downstream 
task. Further, the
evaluation of sample complexity, convergence behavior of the training procedure,
and the degree to which the representations remain invariant when the datasets
are scaled or if an additional confound must be controlled for, require careful
treatment and remain an active area of research \cite{jaiswal2019unified} \cite{NIPS2018_7756}. 
An alternative strategy is to ask for
{statistical independence 
of the latent representations learned by the network 
and
the extraneous variable.}
For example, one may approximately measure mutual information \cite{cover1999elements} between
the latent representations
and the extraneous variable \citet{moyer2018invariant}.
This idea as well as
the use of alternative distance and divergence measures is popular
\cite{li2014learning, Louizos2016TheVF}, and in most cases, perfectly
models the innate requirements of the task.
In practice, however, 
{their viability 
depends} on a variety of computational
and implementation considerations, where design/approximation choices
may frequently lead to representations where a modest adversary
can successfully recover information about 
the extraneous variable fairly reliably.

\begin{figure}[!t]
	\centering
	\includegraphics[width=0.9\linewidth]{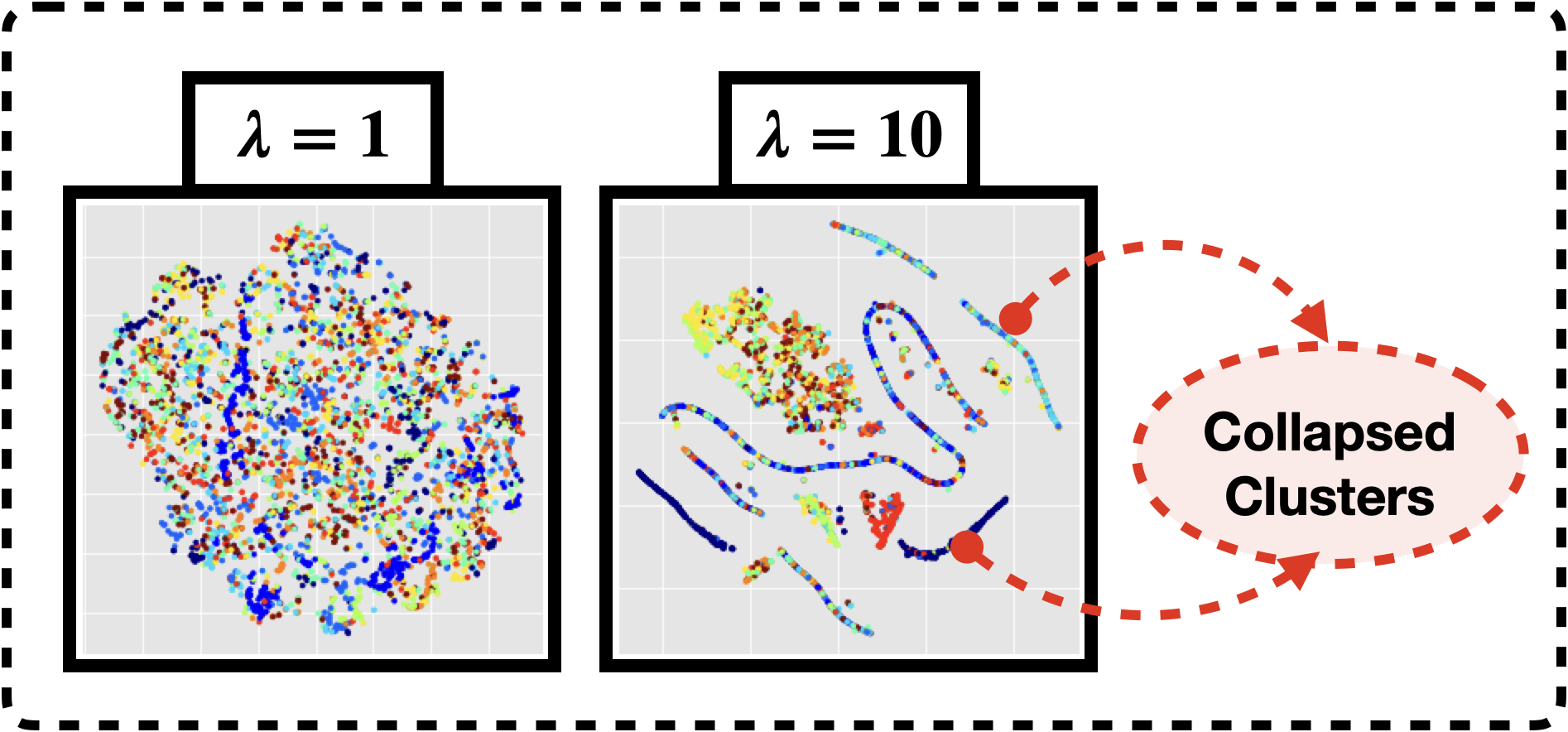}
	\caption{t-SNE plots for MNIST style experiment where the digit label 
		is the extraneous variable $c$. 
		For existing compression regularizers, increasing the regularization weight $\lambda$ results in the collapse of latent space as indicated by the plot.}
	\label{fig:penalty_property1}
\end{figure}

{\bf Moving from theory to practice.}
To operationalize the statistical independence criterion described above,
a sensible modeling choice is to use mutual information \cite{moyer2018invariant} and then
choose a good approximation. We describe this setting briefly to 
identify some practical issues that affect the overall behavior: 
%
instead of minimizing the mutual information $I(z,c)$, 
where
$z$ denotes the latent representation of the data $x$ and
$c$ denotes the extraneous variable, 
we may minimize a suitable upper bound instead as shown below
{
	\begin{align}
	I(z, c) \le \underbrace{\mathbb{E}[KL[q(z|x) || q(z)]]}_{\text{$a)$ Compression}} - \underbrace{\mathbb{E}[\log(p(x|z,c))]}_{\text{$b)$ Reconstruction}}
	\label{eq:mi_bound}
	\end{align}}
{
  The bound in  \eqref{eq:mi_bound} considers contributions from two terms:
  \begin{inparaenum}[\bfseries (a)]
  \item one that \textit{compresses} $x$ into $z$ via an encoder modeled using 
    conditional likelihood $q(z|x)$ (whose marginal is $q(z)$), and
  \item the second which
    \textit{reconstructs} $x$ from $z$ and $c$ via a decoder
    \end{inparaenum}
    $p(x|z,c)$.
	Clearly, if $c$ is available for free during decoding,
	there is no reason for the model not to aggressively compress $x$, 
	while keeping just enough information content to reliably reconstruct
	it during decoding.
	When both terms function as intended, the balance will lead to
	representations that are invariant to $c$, as desired.

	Let us temporarily set aside the reconstruction term and
	evaluate the compression term in \eqref{eq:mi_bound} which 
	ideally will remove from $z$ the information regarding $c$. 
	It is used as an invariance regularizer and controlled using a weight parameter $\lambda$ 
	as follows
	\begin{align}
	\underbrace{\lambda}_{\text{Weight}} \underbrace{\mathbb{E}[KL[q(z|x) || q(z)]]}_{\text{ Compression Regularizer}}
	\label{eq:compress}
	\end{align}
	To minimize \eqref{eq:compress} in a computationally tractable way, 
	one may model $q(z|x)$ 
	as a Gaussian which allows \eqref{eq:compress} to be approximated using 
	pairwise distances $KL(q(z|x_i)$, $q(z|x_j))$, where $x_i,x_j$ are different input samples. 
	With this assumption, $KL(q(z|x_i), q(z|x_j))$ admits a closed form and roughly translates to 
	the difference between the means of these two Gaussians scaled by the covariance \cite{wasserman2013all}.
	For a reasonable weighting $\lambda$, we obtain some invariance to $c$ in $z$
	but an adversary can still recover $c$ from $z$.  
	Increasing $\lambda$ -- to improve the invariance behavior --
	leads to the means of the conditionals coming closer to each other.
	Since the mean of the conditional $q(z|x)$ is used as the
	encoded representation of $x$, this also brings the representations closer together.
	Note that this compression of the means is agnostic of the extraneous variable $c$.
	In practice, this leads to a collapse of the latent space and
	formation of clusters when the strength of the regularizer is increased, making it easier
	for an adversary to recover $c$ from $z$. }
\begin{figure}[!b]
	\centering
	\includegraphics[width=0.99\linewidth]{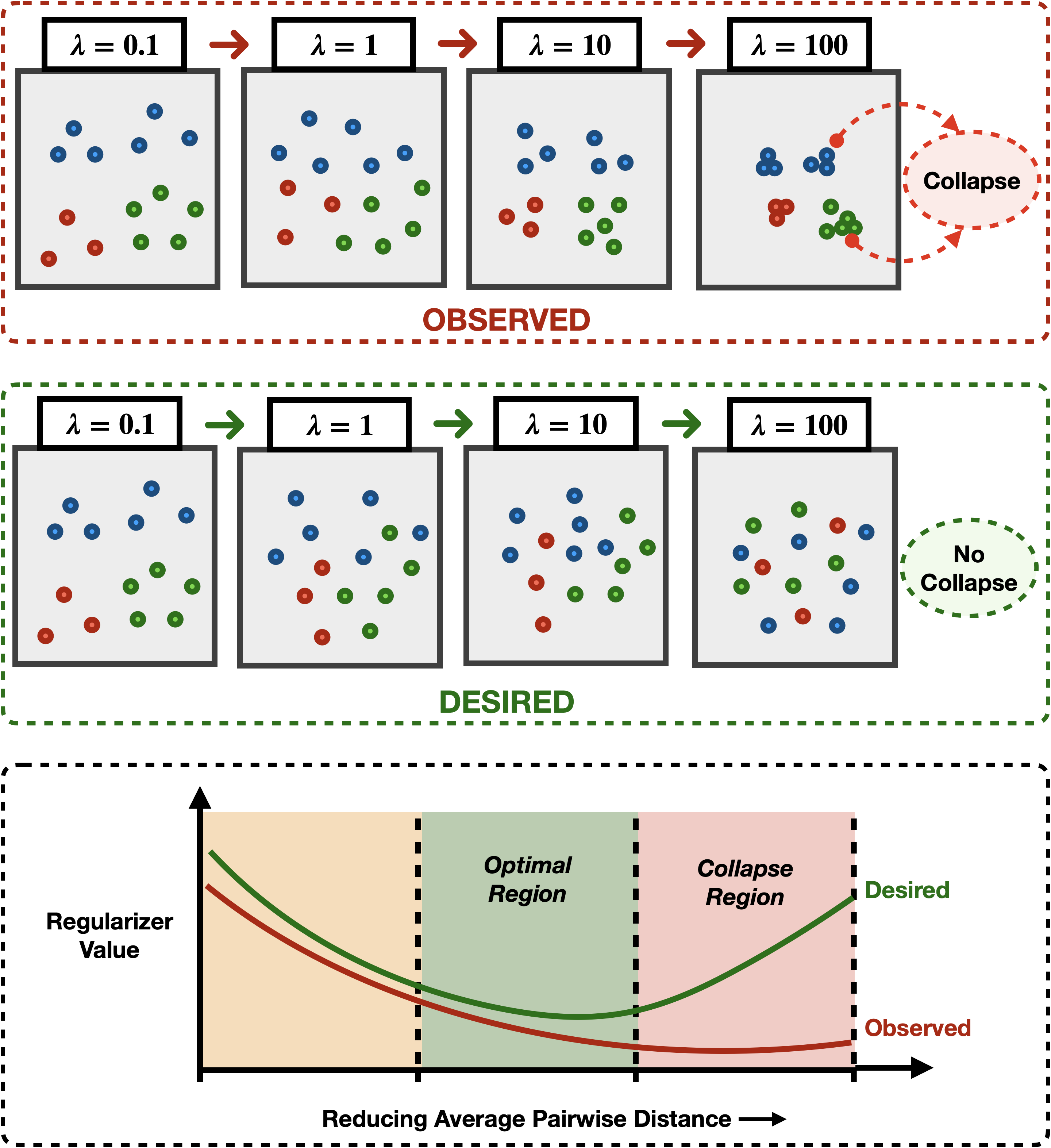}
	\caption{\label{fig:penalty_property} \textbf{(Top):} Representations generated by existing regularizers have some invariance for moderate weights $\lambda$, but form clusters for large $\lambda$ values. \textbf{(Center):} The desired behavior is to spread intraclass samples and mix interclass samples giving rise to high invariance. The proposed ICL regularizer intuitively captures this notion. \textbf{(Bottom):} Existing compression regularizers are observed to let average 
		distance between samples decrease and discourage cluster formation. A desired regularizer would assign high penalty in the collapse region and prevent clustering.}
\end{figure}

{\bf An example.} We illustrate the above behavior experimentally using the setup of \cite{moyer2018invariant}
for unsupervised representation learning in MNIST in Figure  \ref{fig:penalty_property1}.
We wish to learn representations which are only informative of the style of digits
but uninformative of the digit label. 
We gradually increase the strength of the compression term, via the weight parameter $\lambda$, and evaluate its effect.  
Since images of the same digits are similar to begin with, they map to representations
which are in close proximity.
This means that the latent space already has a rough grouping
of representations based on the digits.
A modest increase of the compression strength causes the inter group distances to decrease. This makes it 
more difficult to distinguish one group from another and can be seen as improving invariance --
but not yet enough that an adversary cannot recover the digit label from the representation. 
However, when the regularizer is increased further,
we observe that the latent representations 
start to form smaller clusters associated with the variable $c$ 
or collapse completely -- degrading invariance -- in fact, 
making it easier for the adversary to identify the digit class. 
The foregoing behavior {(see Figure \ref{fig:penalty_property})} is
not an artifact of approximation choices. 
Consider a latent variable $z$ and a binary variable $c \in \{0,1\}$ we wish to control
for. Here, we are concerned with the conditional distributions $p(z|c=0)$ and $p(z|c=1)$.
Let us assume that we use a divergence $\mathcal{D}$, and
statistical independence between $z$ and $c$ is imposed
by a soft-version of the constraint $\mathcal{D}(p(z|c=0), p(z|c=1)) = 0$.
{For each value of $c$, if the latent space has clusters to begin with 
	and one optimizes
	{\em both} $p(z|c=0)$ and $p(z|c=1)$ together, with no mechanism to spread/inter-mix the 
	representations, the latent space may remain clustered with respect to $c$ 
	when we increase the weight of the invariance term. The above issue has less to do with
	how the distributional overlap is measured and can instead be attributed to
	not discouraging the formation of clusters. }
It seems that an explicit use of the extraneous variable during the encoding step may provide
an effective workaround.

The basic intuition expressed above,
is that $x$'s that pertain to different values of $c$'s should map to representations
$z$'s which are ``mixed'' yet contain enough information to keep the reconstruction error low.
At the same time,
representations for a specific value of $c$ should be spread out, and
not locally collapse 
to a point even when the weight parameter $\lambda$ is increased. 

{The \textbf{main contributions }of this paper include $(a)$ We propose {\em Inverse} Contrastive Loss (ICL) for learning invariant representations inspired from the class of contrastive losses \cite{lecun2005loss}. Our proposed loss is  computationally efficient as it does not require specialized solvers or additional training through adversarial modules. $(b)$ We interpret ICL by drawing a relation with the well studied Maximum Mean Discrepancy (MMD) as well as energy functionals used in dynamical systems analysis. $(c)$ We demonstrate that ICL provides invariant representations for not only discrete extraneous variables but also continuous ones.}

\section{From Contrastive Models to Inverse Contrastive Representation Learners}
We will now briefly review concepts from the recently proposed framework of {\em Contrastive Loss (CL}) functions. We will denote our input data using tuple of random variables  $(x,x^-) \in \R^{d_1}\times \R^{d_2}$ where $x^-$ is a {\em negative} sample, that is, if $x$ can semantically be classified as $y$, then $x^-$ is closer  to a different class $y^-\neq y$. As usual, in unsupervised learning, $y,y^-$ are not available during training.   Let $z$ (and similarly$,z^- $) denote the latent representation of $x$ that may be obtained using a feature extraction
scheme like ResNet, DenseNet or others \cite{he2016identity}.
Finally, a CL function is defined by $ \ell \left(z^T\left(z^+-z^-\right)\right)$ where $z^+$ is the representation of a sample from the same class as $x$ and $\ell$ can be  any classification loss function such as hinge, softmax etc., see Definition 2.3 in \citep{saunshi2019theoretical}. In essence, the definition of CL function captures the simple notion of {\bf contrastiveness}  that {\em semantically} similar points should have  {\em geometrically} similar representations \citep{1640964}. To see this, assume that $\ell$ is the logistic loss, then it is easy to see that $ \ell\left(z^T\left(z^+-z^-\right)\right)$ is small for a high intraclass similarity $z^Tz^+$ and a low interclass similarity $z^Tz^-$.  We say that a model is {\em contrastive} if it satisfies the contrastiveness property. We will now list some basic mathematical notations that we will use throughout the rest of the paper.  

{\bf Basic Notations.} For any pair of random variables $(x_1,x_2)$, we will use $p(x_1,x_2), p(x_1|x_2)$ to denote the joint and conditional distribution respectively. $\updelta(x)$ represents the dirac delta measure at $x\in\R^d$, and the indicator function $\mathbbm{1}(\cdot)$ evaluates to $1$  if the argument is true, and $0$ otherwise. For a positive definite kernel $k(x,y)$, MMD divergence \citep{gretton2008kernel} between distributions $p, q$ is defined as, 
\begin{align}
\mmd_{k} = \underset{\substack{x \sim p \\ x' \sim p}}{\mathbb{E}}k(x,x') + \underset{\substack{y \sim q \\ y' \sim q}}{\mathbb{E}} k(y,y') - 2\underset{\substack{x \sim p \\ y \sim q}}{\mathbb{E}} k(x,y)
\label{eq:mmd}
\end{align}
For $z,z'\in \R^d$, we will use $d(z,z')$ to be the Euclidean norm $\|z-z'\|_2$ unless otherwise stated, and
$\mathcal{N}_{\delta}(z)$ denotes the Euclidean ball of radius $\delta$ centered at $z$.  For a subset $X\subseteq \R^d$, we will use $\mathcal{P}(X)$ to denote the space of probability distributions over $X$.

\subsection{How to invert a CL function to learn  invariant representations?}
In this section, we will define our Inverse Contrastive Loss (ICL) that can be used to learn  representations that are  {\em invariant}  to an extraneous (random) variable $c$.  At a high level, our procedure consists of the following two steps: 
\begin{enumerate}
	\item {\bf Formal Inversion (FI):} {invert the contrastiveness property to reflect low intraclass and high interclass similarity by switching the role of  $z^Tz^-$ and $z^Tz^+$ via sign flip;}
	\item {\bf Addition of Weighted Neighborhood Kinks (AWNK): } {apply an increasing function on interclass similarity $z^Tz^-$ and a decreasing function on the intraclass similarity $z^Tz^+$}
\end{enumerate}
While the two step procedure mentioned above implicitly defines an Inverse CL (ICL) function, note that it
is well defined as long as the CL function is. Before we present a precise
definition of ICL, it is meaningful to see why FI+AWNK can improve invariance to an extraneous variable.

{\bf  Sufficiency of FI+AWNK.} As discussed earlier in Section \ref{section:introduction},
for learning invariant representations, it is desirable that features with similar $c$  be spread apart in the latent space while the features with dissimilar $c$ be closer to each other.
FI explicitly formalizes the idea that invariance should be
better for
high interclass similarity $z^Tz^-$ and a low intraclass similarity $z^Tz^+$.  AWNK can be thought of as a disentanglement step that allows us to handle interclass and intraclass similarities appropriately.
{
	The interclass similarity  $z^Tz^-$ is expressed with a quadratic function similar to \cite{1640964}. For intraclass similarity $z^Tz^+$,  \cite{1640964} suggests using a clipped quadratic function which is inefficient for gradient based methods because the gradient in the clipped region of the function is always zero. In contrast, 
	we propose to use exponential loss which provides non-zero gradient values. While other alternative functions are  applicable here, we will see shortly in Section \ref{section:spring_force} that the exponential loss provides a means to draw an interesting connection between ICL and well-studied and mature ideas like MMD divergence. To sum up, AWNK's role is to prevent the intraclass representations from locally collapsing {\bf even} for a wide range of values of the regularization parameter. }

\subsection{ICL -- A Probabilistic Definition}
From now on, we will use distance/metric $d(z,z')$ to measure similarity -- closer points are similar.
Intuitively this can be expressed by saying that on average features which share similar values for extraneous variables have representations that are further from each other, while features that have dissimilar values for extraneous variables have closer representation. We operationalize this intuition by inverting the class of contrastive losses.

\begin{definition}\label{def:icl_joint}
	Let $p(z,c)$ be the joint distribution for representation variable $z\in {Z}$ and extraneous variable $c \in \mathcal{C}$. Let $d_Z(z,z')$ be the distance metric on ${Z}$ and $\mathcal{N}_\delta(c)$ denote the $\delta$-neighbourhood centered at $c$. For $s(z,z') = d^2_Z(z,z')$ and $f(z,z')=\exp{(\alpha - \beta d_Z(z,z'))}$, $\beta > 0$, we define $\icl_{\alpha, \beta}^{\delta}(z,c):\mathcal{P}({Z}\times \mathcal{C}) \mapsto \mathbb{R}_{+}$ as
	\begin{align}
	\icl_{\alpha, \beta}^{\delta}(z, c) = \underset{\substack{(z,c) \sim p(z,c) \\ (z',c') \sim p(z', c')} }{\mathbb{E}}&\bigg[\mathbbm{1}(c' \in \mathcal{N}_\delta(c)) f(z,z') \; +\\
	& \mathbbm{1}(c' \notin \mathcal{N}_\delta(c)) s(z,z')\bigg]	
	\label{eq:icl_joint}
	\end{align} 
\end{definition}

In Definition \ref{def:icl_joint}, $\mathcal{N}_\delta(c)$ encodes the similarity aspect of extraneous variables using its (underlying) geometry. A simple calculation shows that ICL functions in \eqref{eq:icl_joint} immediately possess two (desirable) geometrical properties by definition:
\begin{inparaenum}[\bfseries (a)]
\item whenever samples have similar extraneous value, our  loss function is specified by $f(z,z')$ -- a {\em decreasing} function of $d_Z(z,z')$; and \item for samples with dissimilar extraneous value, the loss is specified by $s(z,z')$ -- an {{\em increasing} function of $d_Z(z,z')$}.
\end{inparaenum}
For the remainder of the paper, we will hide the AWNK parameters $\alpha, \beta$ and radius $\delta$ in  ICL functions \eqref{eq:icl_joint} whenever appropriate.

\begin{figure*}[!t]
	\centering
	\includegraphics[width=0.99\linewidth]{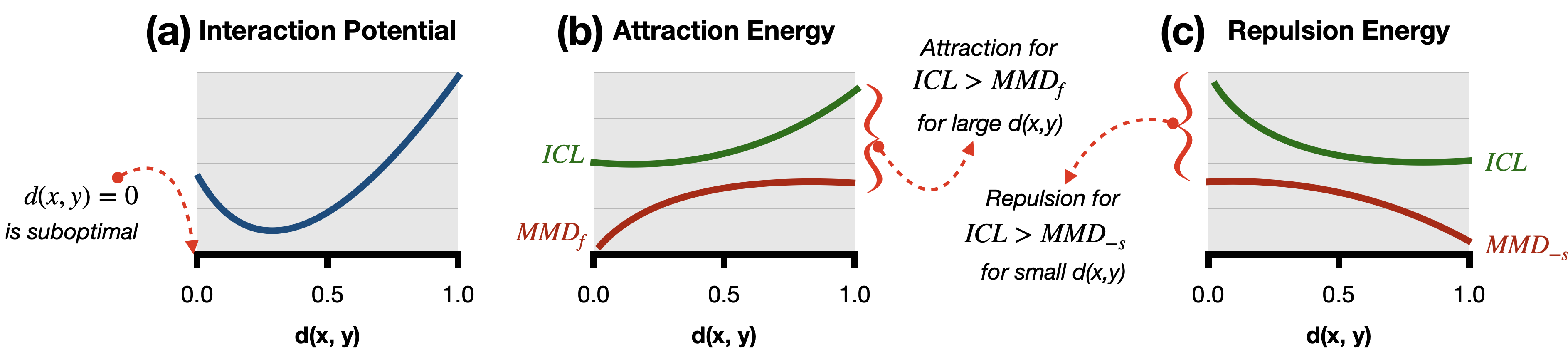}
	\caption{ \label{fig:icl_dynamics} (a) We plot the interaction potential for the functional $R_w$. The functional $R_w$ prevents the collapse of representation space by shifting the minima away from the trivial solution $d(x,y)=0$. (b) We compare the attraction energy between $\icl$ and $\mmd_{f}$. The attraction for $\icl$ is larger than for $\mmd_{f}$ when the particles are farther apart. (c) We plot the repulsion energy of  $\icl$  and $\mmd_{-s}$. The repulsion for $\icl$ is larger than $\mmd_{-s}$  when the particles are in close neighborhood.}
\end{figure*}

\begin{remark}It turns out that optimizing ICL  is equivalent to driving a  spring system to equilibrium in which samples with similar extraneous values are connected by a push spring while samples with dissimilar extraneous values are connected by a pull spring, see \citep{1640964}. In particular, the neighborhood radius $\delta$  in our ICL functions \eqref{eq:icl_joint} determines the level of control exerted by these connections in the system -- a large $\delta$ forces the latent representations to come closer while a smaller $\delta$ drives the representation to be a bit more spread. 
	
\end{remark}

{\bf Handling Discrete Extraneous Variables using ICL.} The following Lemma states that definition of ICL function in \eqref{eq:icl_joint} is closely related to the standard MMD distance in \eqref{eq:mmd}.  \begin{lemma}[ICL is equivalent to R-MMD] \label{lem:icl-r-mmd}Assume that the extraneous variable $c$ is binary with $p(c=0)=1/2$. Then there exist a conditionally positive definite kernel $g$ and an interaction energy functional {$R_w$}  (see equation 1.1 in \citep{carrillo2012gradient}) such that the following equality holds:\begin{align}
	\icl(Z, C) = \mmd_{g}(p_0, p_1) + R_w(p_0, p_1),\label{eq:icl-r-mmd}
	\end{align}
	where $p_0$ and $p_1$ denote the conditional distributions $p(z|c=0)$ and $p(z|c=1)$ respectively.
\end{lemma}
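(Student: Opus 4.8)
\emph{Proof plan.} The plan is to expand the double expectation in \eqref{eq:icl_joint} into the four cases indexed by $(c,c')\in\{0,1\}^2$ and then regroup the terms into an MMD-shaped piece plus a leftover that turns out to be an interaction energy. Throughout I would abbreviate $\langle h\rangle_{\rho,\sigma}:=\mathbb{E}_{z\sim\rho,\,z'\sim\sigma}\,h(z,z')$ and $\langle h\rangle_{\rho}:=\langle h\rangle_{\rho,\rho}$. First, for binary $c$ any radius $\delta<1$ (for the discrete metric on $\{0,1\}$) gives $\mathcal{N}_\delta(c)=\{c\}$, so the indicators in \eqref{eq:icl_joint} become $\mathbbm{1}(c'=c)$ and $\mathbbm{1}(c'\neq c)$. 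Writing $p_0=p(z\mid c=0)$, $p_1=p(z\mid c=1)$ and using $p(c=0)=p(c=1)=\tfrac12$, the pair $(c,c')$ hits each of $(0,0),(1,1),(0,1),(1,0)$ with probability $\tfrac14$; merging the two off-diagonal cases by symmetry of $s(z,z')=d_Z^2(z,z')$, this reduces $\icl$ to
\begin{align}
\icl(Z,C)=\tfrac14\langle f\rangle_{p_0}+\tfrac14\langle f\rangle_{p_1}+\tfrac12\langle s\rangle_{p_0,p_1}.
\end{align}
The hypothesis $p(c=0)=\tfrac12$ is used precisely here, to equalize the two within-class $f$-coefficients, which is what the next step needs.

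Next I would peel off an MMD term. Take $g:=-\tfrac14\,d_Z^2=-\tfrac14 s$; this is conditionally positive definite because $\|z-z'\|_2^2$ is conditionally negative definite (a two-line check: $\sum_{i,j}c_ic_j\|z_i-z_j\|_2^2=-2\|\sum_i c_iz_i\|_2^2\le 0$ whenever $\sum_i c_i=0$) and positive scaling preserves this. Expanding $\mmd_{g}(p_0,p_1)=\langle g\rangle_{p_0}+\langle g\rangle_{p_1}-2\langle g\rangle_{p_0,p_1}$ via \eqref{eq:mmd}, its cross term $-2\langle g\rangle_{p_0,p_1}$ equals exactly $\tfrac12\langle s\rangle_{p_0,p_1}$, so it absorbs the last term above and leaves
\begin{align}
\icl(Z,C)-\mmd_{g}(p_0,p_1)=\tfrac14\langle f+s\rangle_{p_0}+\tfrac14\langle f+s\rangle_{p_1}.
\end{align}
(In passing, $\mmd_{g}(p_0,p_1)=\tfrac12\|\mathbb{E}_{p_0}z-\mathbb{E}_{p_1}z\|_2^2\ge 0$, so this first summand is genuinely of MMD type.)

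To finish I would recognize the right-hand side as an interaction energy. Since $f(z,z')+s(z,z')=\exp(\alpha-\beta d_Z(z,z'))+d_Z^2(z,z')$ depends on $d_Z(z,z')$ alone, I read off the radial interaction potential $W(r)=e^{\alpha-\beta r}+r^2$ and set $R_w(p_0,p_1)$ to be this remainder, a sum of two interaction-energy functionals $\tfrac14\iint W(z-z')\,d\rho\,d\rho$ of the form in equation~(1.1) of \citep{carrillo2012gradient}; combining the two displays then yields \eqref{eq:icl-r-mmd}. As a check consistent with Figure~\ref{fig:icl_dynamics}, $W'(0)=-\beta e^{\alpha}<0$, so $W$ attains its minimum at some $r^\ast>0$ and $R_w$ is \emph{not} minimized at the collapsed state $z=z'$. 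If a single canonical interaction energy of the $z$-marginal $\bar p=\tfrac12(p_0+p_1)=p(z)$ is preferred, one instead shifts part of $W$ into the kernel, taking $g=\tfrac18(f-s)$, which is still conditionally positive definite (now also using that the exponential kernel $e^{-\beta\|z-z'\|_2}$ is positive definite); a short coefficient check then gives $\icl(Z,C)=\mmd_{g}(p_0,p_1)+\tfrac12\iint W(d_Z(z,z'))\,d\bar p(z)\,d\bar p(z')$.

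I expect the algebra to be routine; the two points that need care are (i) the indicator reduction for binary $c$ and keeping track of where $p(c=0)=\tfrac12$ actually enters, and (ii) the sign bookkeeping, so that the part peeled off is driven by a conditionally \emph{positive} definite kernel while the remainder is a bona fide interaction energy functional whose potential $W(r)=e^{\alpha-\beta r}+r^2$ has its minimum away from $r=0$. Once $g$ and $W$ are correctly identified, no deeper obstacle remains.
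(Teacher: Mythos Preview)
Your argument is correct, and you in fact present two valid decompositions. Your primary one, with $g=-\tfrac14\,d_Z^2$, differs from the paper's: the paper takes $g(z,z')=\tfrac18(f(z,z')-s(z,z'))$ and $w(z,z')=\tfrac18(f(z,z')+s(z,z'))$, obtaining the remainder $R_w(p_0,p_1)=\langle w\rangle_{p_0}+\langle w\rangle_{p_1}+2\langle w\rangle_{p_0,p_1}$ \emph{with} a cross term---precisely your ``alternative'' at the end, which you correctly identify as the single interaction energy $\tfrac12\iint W\,d\bar p\,d\bar p$ of the marginal $\bar p=\tfrac12(p_0+p_1)$. Your primary route is a bit more elementary (the kernel is purely quadratic and $\mmd_g$ collapses to $\tfrac12\|\mathbb{E}_{p_0}z-\mathbb{E}_{p_1}z\|^2$), but the leftover is then a \emph{sum} of two within-class interaction energies with no cross coupling; this still fits the Carrillo form literally, yet it is not the two-argument $R_w(p_0,p_1)$ the paper actually defines and then uses in Observation~\ref{rmk:sig_icl} and equation~\eqref{eq:icl-simple}. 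The paper's choice pushes the Laplacian part $f$ into both $g$ and $w$, which buys a single symmetric potential on the marginal and cleanly feeds the subsequent attraction/repulsion discussion. Either decomposition proves the lemma as stated.
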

The proof of Lemma \ref{lem:icl-r-mmd} is included in the appendix. In essence, Lemma \ref{lem:icl-r-mmd} states that if $c$ is binary, then optimizing $\icl$ is equivalent to optimizing a Regularized-MMD (R-MMD) divergence between conditional distributions $p(z|c=0)$ and $p(z|c=1)$. 
Recall from Section \ref{section:introduction} that $\mmd(p_0, p_1) = 0$ is a sufficient condition for statistical independence between $z$ and $c$.  
Hence, for the special case considered here, we see that $\icl$ ensures statistical independence constraint using R-MMD divergence. To see that $\mmd_g$ is a valid divergence, note that the kernel $g$ is conditionally positive definite since it is a composition of a laplacian kernel and a euclidean distance matrix. Please see appendix for details on how to generalize Lemma \ref{lem:icl-r-mmd} to multiclass setting, when $c$ is (discrete) uniformly distributed.

In practice, we are often {\em only} given access to empirical samples of $z$ and $c$. This becomes problematic for optimization purposes since  we can only evaluate the divergences {\bf approximately}  -- approximate zeroth order oracle.  In the next section, we study the finite sample optimization properties of R-MMD \eqref{eq:icl-r-mmd} using control theoretic constructions.

\subsection{Exploring the Landscape of ICL functions using Spring Forces}
\label{section:spring_force} The following observation establishes a link between the $R_w$ term in \eqref{eq:icl-r-mmd} and {\em distributional interaction energy functionals} used in analyzing dynamical systems \cite{carrillo2012gradient}.  
\begin{observation}[Significance of $R_w$]\label{rmk:sig_icl}
	{The regularizer $R_w(p, q)$ is composed of pairwise energy functional $w(x,y) \sim f(x,y)+s(x,y)$ between particles of the system \citep{1640964}.} Intuitively, when input distributions $p$ and $q$ are decision variables of an optimization problem, $\mmd_g$ admits a trivial solution, $p=q=\updelta(0)$, that is, $p$ and $q$ collapse to a single point mass. However, this trivial solution is almost surely {\em suboptimal} for $R_w$ (see Figure \ref{fig:icl_dynamics}a.), thus decreasing the chances of such a collapse. 
	Indeed, since $R_w$ forces representations to stay apart even when the regularizer weight is arbitrarily increased, which suggests that $R_w$ may be reasonable for learning invariant representations. 	\label{remark-2}
\end{observation}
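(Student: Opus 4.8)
The Observation bundles two claims: that $p=q=\updelta(0)$ is a global minimizer of the $\mmd_g$ term, and that this same collapsed configuration fails to minimize $R_w$ (hence $\icl$), with the corollary that the spread persists as the regularization weight grows. The first claim I would dispatch in one line: for any symmetric kernel, the defining formula \eqref{eq:mmd} gives $\mmd_g(p,p)=0$, and $\mmd_g\ge 0$ because $g$ is conditionally positive definite (noted right after Lemma~\ref{lem:icl-r-mmd}); so every diagonal pair, in particular $p=q=\updelta(0)$, is globally optimal for $\mmd_g$. All the content is in the second claim.

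\textbf{Reducing $R_w$ to a one-variable potential.} My plan is to read off from the proof of Lemma~\ref{lem:icl-r-mmd} that, once the $\mmd_g$ piece is split away, the residual $R_w(p_0,p_1)$ is a finite sum of Carrillo-type interaction energies (cf.\ eq.~1.1 in \citep{carrillo2012gradient})
\[
R_w(p_0,p_1)=\sum_i c_i \iint w\big(d_Z(x,y)\big)\, d\mu_i(x)\, d\mu_i(y),\qquad w(r)=a\,e^{\alpha-\beta r}+b\,r^2,
\]
with $c_i,a,b>0$ and each $\mu_i$ a positive linear combination of $p_0$ and $p_1$ --- this is exactly the ``$w\sim f+s$'' asserted in the Observation. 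The key is then a one-variable calculus fact about $w$ on $[0,\infty)$: $w''(r)=a\beta^2 e^{\alpha-\beta r}+2b>0$, so $w$ is strictly convex; $w'(0)=-a\beta e^{\alpha}<0$; and $w(r)\to\infty$ because the $r^2$ term dominates while $e^{\alpha-\beta r}\to 0$. Hence $w$ has a unique minimizer $r^\star$, it solves $2b\,r^\star=a\beta e^{\alpha-\beta r^\star}>0$, and crucially $w(r^\star)<w(0)$. This is precisely the interaction potential with its minimum shifted away from $d(x,y)=0$ drawn in Figure~\ref{fig:icl_dynamics}(a).

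\textbf{Suboptimality of collapse and the large-weight corollary.} Now I would compare two configurations. For $p_0=p_1=\updelta(0)$ every pair of particles sits at distance $0$, so $R_w$ equals a fixed positive constant $K:=\sum_i c_i(\mu_i(Z))^2$ times $w(0)$. Replacing this by $p_0=p_1=\tfrac12\updelta(0)+\tfrac12\updelta(a)$ with $d_Z(0,a)=r^\star$, each $\mu_i$ inherits the same two-atom form, so a two-line computation gives $R_w=\tfrac12\big(w(0)+w(r^\star)\big)K<w(0)K$, while $\mmd_g$ remains $0$ because $p_0=p_1$ is preserved; thus $\icl=\mmd_g+R_w$ is strictly smaller, so the collapsed Dirac minimizes neither $R_w$ nor $\icl$. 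If one wants the ``almost surely'' read literally --- e.g.\ in the finite-sample regime where $R_w$ is evaluated on $n$ particles --- the same comparison shows any minimizing empirical configuration must keep at least two particles a definite distance apart, so with probability one it is non-degenerate. For the final sentence of the Observation, note that a training objective $L_{\mathrm{task}}+\lambda(\mmd_g+R_w)$ has the minimizer of $\lambda\cdot\icl$ independent of $\lambda>0$: the attraction--repulsion balance inside $\icl$, hence the equilibrium spacing $r^\star$, is scale invariant, so sending $\lambda\to\infty$ only enforces this fixed spread more strongly rather than collapsing it --- in contrast with the compression regularizer of Section~\ref{section:introduction}, whose minimizer drifts toward collapse as its weight grows.

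\textbf{Main obstacle.} The one delicate step is the first: verifying, from the construction inside the proof of Lemma~\ref{lem:icl-r-mmd}, that the MMD subtraction really does leave $R_w$ a sum of genuine interaction energies against positive measures built from $p_0,p_1$, with pair-potential $w(r)=a\,e^{\alpha-\beta r}+b\,r^2$ whose short-range barrier coefficient $a$ is strictly positive (no sign flip, nothing cancelled). Everything downstream --- strict convexity of $w$, existence, uniqueness and positivity of $r^\star$, and the two-atom comparison --- is routine. A secondary point worth stating explicitly is that ``suboptimal'' is meant for the composite $\mmd_g+R_w$: the improving configuration is deliberately chosen with $p_0=p_1$ precisely so that it pays nothing on the $\mmd_g$ term.
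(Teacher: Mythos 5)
Your proposal is correct and follows essentially the same route the paper intends for this (informal) Observation: with the appendix definition \eqref{eq:rw_def} one has $R_w(p_0,p_1)=\iint w\,d(p_0+p_1)\,d(p_0+p_1)$ with pair potential $w=(f+s)/8$, i.e.\ $w(r)=(e^{\alpha-\beta r}+r^2)/8$, which is strictly convex with $w'(0)<0$ and coercive, so its unique minimum sits at some $r^\star>0$ --- precisely the shifted-minimum picture of Figure \ref{fig:icl_dynamics}(a) that the paper uses as its justification, and the coefficients you flagged as the delicate step are indeed both strictly positive ($a=b=1/8$, no cancellation from the $\mmd_g$ split in Lemma \ref{lem:icl-r-mmd}). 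Your explicit two-atom comparison (chosen with $p_0=p_1$ so the $\mmd_g$ term stays zero while $R_w$ strictly decreases) and the $\lambda$-scale-invariance remark simply make rigorous what the paper leaves at the level of the plotted interaction potential, so there is no gap.
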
 

Plugging in the definition of $R_w$ ({see appendix}) in equation \eqref{eq:icl-r-mmd} and rearranging, we have that,
\begin{align}
\icl(z, c) 
= \overbrace{\underset{\substack{x \sim p_0\\
			x' \sim p_0}}{\mathbb{E}} \mbox{\Large$\frac{f(x,x')}{4}$}+\underset{\substack{y \sim p_1\\y'\sim p_1}}{\mathbb{E}} \mbox{\Large$\frac{f(y,y')}{4}$}    }^{\text{Repulsion}}+\overbrace{\underset{\substack{x \sim p_0 \\ y \sim p_1}}{\mathbb{E}}   \mbox{\Large$\frac{s(x,y)}{2}$}   }^{\text{Attraction}}
\label{eq:icl-simple}
\end{align}	
Intuitively,  \eqref{eq:icl-simple} shows that ICL can be decomposed into two terms: \begin{enumerate*}\item {\bf Attraction} $s(\cdot,\cdot)$ between
	interclass particles; and \item {\bf Repulsion} $f(\cdot,\cdot)$ between intraclass particles. \end{enumerate*} That is, ICL can be interpreted as modeling interclass and intraclass connection between particles (representations)  using two types of springs $f,s$. Indeed,  a similar decomposition is also possible for MMD by setting $s=-k,f=k$. For optimization purposes, our choice of $f$ and $s$ in R-MMD immediately yields {\bf two}  crucial benefits that is absent in MMD:

{\bf Benefit 1 -- ICL is well suited for First Order Methods.}   By definition, gradient of spring energy with respect to the distance $d(x,y)$ is  the sum of attraction and repulsion connecting two particles. 
ICL and MMD$_{f}$ differ in the attraction spring between interclass samples. 
When the distance between samples $d(x,y)$ is large, the attraction under ICL given by $\|\nabla_ds(x,y)\|_2$ is larger than the attraction under $\mmd_{f}$ given by $\|\nabla_df(x,y)\|_2$ ({Figure \ref{fig:icl_dynamics}b}). 
Furthermore attraction $\|\nabla_ds(x,y)\|_2$ increases with $d(x,y)$ while $\|\nabla_df(x,y)\|_2$ deceases.
Hence, while using first order methods like gradient descent, 
farther particles come closer {\em faster} while using ICL.

{\bf Benefit 2 -- ICL prevents particles from collapsing.} In the context of learning invariant representations, ICL and MMD$_{-s}$ differ in repulse-only springs between intraclass samples. {{}{}For ICL, the repulsive forces $\|\nabla_df(x,y)\|_2$ between samples increases as the particles come close together while for $\mmd_{-s}$ the force $\|\nabla_ds(x,y)\|_2$ decreases ({Figure \ref{fig:icl_dynamics}c}). Hence, whenever gradient based methods are used for training, ICL may be beneficial since the intraclass particles are pushed apart {\em strongly} when they are in the same neighborhood, as desired.  }

{\bf ICL Optimization provides adversarially invariant representations.}  
It turns out that the above two benefits can be used to prove that models obtained by optimizing ICL derived loss can confuse adversaries. Formally, consider an adversary $b$ that uses representation $z$ to predict a continuous extraneous variable $c$. 
We will use the mean squared error (MSE) $\mathbb{E}_z[(b(z)-c)^2]$ to measure invariance, that is, a high value of MSE implies high  invariance (desired). 
The following Lemma provides a lower bound on the MSE as a function of ICL under standard assumptions on $b$. \begin{lemma}
	Assume that the extraneous variable $c$ is continuous and $b$ is $L$-lipschitz, and let $\rho = P_{c,c'}(|c-c'| > \delta)$. Then there exists $\alpha$, and $\epsilon < \delta^2\rho^2/L^2$ such that for $\icl^{\delta}_{\alpha, \beta}(z,c) < \epsilon$, the MSE of adversary $b$ is lower bounded i.e, $ \mathbb{E}_z[(b(z)-c)^2] \ge (\delta \rho - L\sqrt{\epsilon})^2/4$. \label{lemma-mse}
\end{lemma}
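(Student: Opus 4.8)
The plan is to exploit the fact that a small value of $\icl^{\delta}_{\alpha,\beta}(z,c)$ forces the representations of pairs with \emph{dissimilar} extraneous values to be close in $d_Z$, while an $L$-Lipschitz adversary $b$ maps close representations to close predictions; since the targets $c,c'$ of such a pair differ by more than $\delta$, $b$ must incur error on at least one of them. Concretely, because $f(z,z')=\exp(\alpha-\beta d_Z(z,z'))>0$ (for \emph{any} $\alpha$), discarding the repulsion term in \eqref{eq:icl_joint} gives
\[
\mathbb{E}_{(z,c),(z',c')}\bigl[\mathbbm{1}(c'\notin\mathcal{N}_\delta(c))\,d_Z^2(z,z')\bigr]\;\le\;\icl^{\delta}_{\alpha,\beta}(z,c)\;<\;\epsilon ,
\]
and for continuous $c$ the event $\{c'\notin\mathcal{N}_\delta(c)\}$ agrees almost surely with $\{|c-c'|>\delta\}$. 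Combining this with the elementary bound $\mathbb{E}[\mathbbm{1}X]^2\le\mathbb{E}[\mathbbm{1}]\,\mathbb{E}[\mathbbm{1}X^2]\le\mathbb{E}[\mathbbm{1}X^2]$ yields $\mathbb{E}[\mathbbm{1}(|c-c'|>\delta)\,d_Z(z,z')]<\sqrt{\epsilon}$.

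Next I would chain the triangle inequality with Lipschitzness: for every pair of draws,
\[
|c-c'|\;\le\;|c-b(z)|+|b(z)-b(z')|+|b(z')-c'|\;\le\;|b(z)-c|+L\,d_Z(z,z')+|b(z')-c'| .
\]
Multiplying by $\mathbbm{1}(|c-c'|>\delta)$ and taking expectations over the i.i.d.\ pair $(z,c),(z',c')$, the left-hand side becomes $\delta\rho$, and the right-hand side becomes $L\,\mathbb{E}[\mathbbm{1}(|c-c'|>\delta)\,d_Z(z,z')]+2\,\mathbb{E}[\mathbbm{1}(|c-c'|>\delta)\,|b(z)-c|]$, where the factor $2$ uses that $(z,c)$ and $(z',c')$ are identically distributed and that the event $\{|c-c'|>\delta\}$ is symmetric. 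Bounding the first term by $L\sqrt{\epsilon}$ (from the previous paragraph) and the second by $2\,\mathbb{E}[|b(z)-c|]\le 2\sqrt{\mathbb{E}_z[(b(z)-c)^2]}$ (drop the indicator, then Jensen/Cauchy--Schwarz) gives
\[
\delta\rho\;\le\;2\sqrt{\mathbb{E}_z[(b(z)-c)^2]}+L\sqrt{\epsilon}.
\]
Since $\epsilon<\delta^2\rho^2/L^2$ makes $\delta\rho-L\sqrt{\epsilon}>0$, rearranging and squaring yields $\mathbb{E}_z[(b(z)-c)^2]\ge(\delta\rho-L\sqrt{\epsilon})^2/4$, which is the claim. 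The hypothesis ``there exists $\alpha$'' is needed only to make the antecedent $\icl^{\delta}_{\alpha,\beta}(z,c)<\epsilon$ attainable by some encoder — taking $\alpha$ sufficiently negative makes $f\le e^{\alpha}$ uniformly small, so the condition is non-vacuous — while the implication itself holds for all $\alpha,\beta$.

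I expect the substance to be routine; the care needed is in the bookkeeping. The two mildly delicate points are (i) peeling off exactly the attraction part of \eqref{eq:icl_joint}, which relies on $f>0$ and on the almost-sure identification of $\{c'\notin\mathcal{N}_\delta(c)\}$ with $\{|c-c'|>\delta\}$ for continuous $c$; and (ii) the symmetrization turning $\mathbb{E}[\mathbbm{1}(\cdot)(|b(z)-c|+|b(z')-c'|)]$ into $2\,\mathbb{E}[\mathbbm{1}(\cdot)|b(z)-c|]$, which uses exchangeability of the pair. Everything else is triangle inequality, Cauchy--Schwarz, Jensen, and a final rearrangement that is legitimate precisely because $\epsilon<\delta^2\rho^2/L^2$.
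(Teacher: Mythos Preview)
Your proof is correct and follows essentially the same strategy as the paper's: discard the nonnegative repulsion term in \eqref{eq:icl_joint}, convert the resulting bound $\mathbb{E}[\mathbbm{1}_{c'\notin\mathcal{N}_\delta(c)}\,d_Z^2(z,z')]<\epsilon$ into $\mathbb{E}[\mathbbm{1}\,d_Z]<\sqrt{\epsilon}$ via Cauchy--Schwarz, invoke Lipschitzness of $b$, and play this off against the lower bound $\mathbb{E}[\mathbbm{1}\,|c-c'|]\ge\delta\rho$. The only organizational difference is that the paper starts from $4\,\mathbb{E}[(b(z)-c)^2]\ge\mathbb{E}\big[((c'-c)-(b(z')-b(z)))^2\big]$ (via $a^2+b^2\ge(a-b)^2/2$), then uses $(a-b)^2\ge(|a|-|b|)^2$ and $\mathbb{E}[X^2]\ge\mathbb{E}[X]^2$ to pull the square outside the expectation, whereas you run the triangle inequality $|c-c'|\le|b(z)-c|+L\,d_Z(z,z')+|b(z')-c'|$ directly, symmetrize, bound by $2\sqrt{\text{MSE}}$, and square only at the end; both routes land on the identical constant $1/4$. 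One small imprecision: after multiplying by the indicator and taking expectation, your left-hand side is $\ge\delta\rho$, not $=\delta\rho$ --- but the inequality points the right way, so nothing changes.
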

The proof of Lemma \ref{lemma-mse} is included in the appendix. Basically, Lemma \ref{lemma-mse} states that if ICL is made sufficiently small, then no Lipschitz adversary can have an arbitrarily small MSE {as expected}.  {  We will now demonstrate the utility of Lemma \ref{lemma-mse} for analyzing datasets used in real world applications.
}

\section{Applications of Inverse Contrastive Loss }\label{sec:apps}
Many representation learning schemes are built on
Variational Auto-Encoder (VAE) based models \citep{kingma2013autoencoding}.
Recently \citep{Cemgil2020Adversarially} showed that
one effective mechanism to 
improve adversarial robustness of representations obtained using VAE
is via data augmentation: creating ``fictive''
data points. 
This can be thought of as providing
invariance w.r.t. adversarial perturbations. 
However, obtaining such perturbations
might not always be possible.
While rotations, flips and crops work
for natural images,
this is problematic for brain imaging data where either
a cropped brain or an image-flip that switches
the asymmetrical relationship between the two hemispheres
is meaningless. Applying a deformation
to generate an augmented sample is defensible,
but requires a great deal of care and user involvement.
Similarly, deploying augmentation strategies for
electronic health records (EHR) or audio data is
not straightforward. 
Section \ref{lemma-mse} provides us the necessary
guidance to explore the use of
ICL regularizer for VAE based representation learners. 

{\bf Setup.} We use the setup based on Conditional VAE and
Variational Information Bottleneck (VIB) \cite{45903} for learning invariant representations 
in unsupervised and supervised setting respectively.
These frameworks have been considered in the context of a
mutual information based regularizer by \cite{moyer2018invariant}.
Briefly, in the unsupervised setting one learns representations $z$ using 
an encoder $q(z|x)$, that maps data $x$ to conditional distribution $q(z|x)$, and 
a decoder $p(x|z, c)$ that reconstructs $x$ from $z$ and $c$.
Gaussian reparameterization trick allows the encoder $q(z|x)$ to be written as $\mathcal{N}(\mu=h(x), \sigma(x))$, where $h(x)$ is the representation learner of interest.
We augment this setup with ICL regularizer and propose optimizing the following objective, 
\begin{align}
\min_{p, q}  \underset{x,c}{\mathbb{E}}\big[\underset{z}{\mathbb{E}}[-\log p(x|z,c)] & + \beta \kl[q(z|x) || p(z)]\big] \\
&+ \lambda \; \icl(z, c)
\label{eq:vae_obj}
\end{align}
where $p(z)$ is standard isotropic Gaussian prior.

For the supervised setting of predicting $y$ from $x$
we augment the VIB framework from \cite{45903} with ICL regularizer and propose optimizing the following objective
\begin{align}
\min_{p,q} -\underset{\substack{x\\c}}{\mathbb{E}}\bigg[\underset{\substack{z\\y}}{\mathbb{E}}\log p(y|z) + \beta \underset{z}{\mathbb{E}}\log p(x|z,c)\bigg] + \lambda \; \icl 
\label{eq:vib_obj}
\end{align}
where $p(y|z)$ is the learned prediction model.

Next, we show ICL's wide applicability by using it with discriminative encoders
that are not based on VAE.
Consider the task of predicting $y$ from $x$ in presence of 
extraneous $c$. To learn representations uninformative of $c$, 
the task is broken down into
learning an encoder $h:x\mapsto z$ and a
predicter $f:z\mapsto y$. We add the ICL regularizer to the loss objective $\ell$ 
and propose optimizing
\begin{equation}
\min_{f, h} \underset{x,y}{\mathbb{E}}[\ell(f(h(x), y))] + \lambda \icl(h(x), c)
\label{eq:sup_obj}
\end{equation}
$h$, $f$ are generally parameterized using deep networks.

{\bf Baselines.}
As discussed in Section \ref{section:introduction} invariance can be enforced using statistical independence or using adversarial modules.
Our proposed ICL loss is compared with the following frameworks from both these categories: 
\begin{enumerate*}
	\item[(a)] Unregularized model,
	\item[(b)] MI regularizer \citep{moyer2018invariant},
	\item[(c)] OT regularizer, where KL term in (b) is replaced with
	Wasserstein distance, 
	\item[(d)] MMD$_{-s}$ (Section \ref{section:spring_force}), 
	\item[(e)] MMD$_f$ (Section \ref{section:spring_force}), 
	based on MMD \cite{li2014learning} and 
	\item[(f)] CAI, Controllable invariance through adversarial feature learning \cite{xie2017controllable}, 
	\item[(g)] UAI, Unsupervised Adversarial Invariance \cite{NIPS2018_7756}.
\end{enumerate*} 

{\bf Quantifying invariance.}  We follow \citep{xie2017controllable} and train a three layered
FC network as an adversary 
to predict the extraneous variable $c$ from latent representations $z$. 
We report the accuracy of this adversary for discrete $c$ 
and MSE for continuous $c$ as 
{the adversarial invariance measure (A)}.

We evaluate the frameworks in terms of task accuracy/reconstruction error
and adversarial invariance on an unseen test set. 
The hyperparameter selection is done on a validation split 
such that best adversarial invariance is achieved for task accuracy 
within $5\%$ of unregularized model for supervised tasks and 
reconstruction MSE within $5$ points of unregularized model 
for unsupervised tasks.  Mean and standard deviation are reported 
on ten runs, except when mentioned otherwise or 
quoting results from previous work.
{We use Adam optimizer for model training. 
More details on training}
and hyperparameters are provided in the appendix.
Next, we present our results grouped by the nature of model
(generative/discriminative) and the dataset.

\begin{figure}
	\centering
	\includegraphics[width=0.9\linewidth]{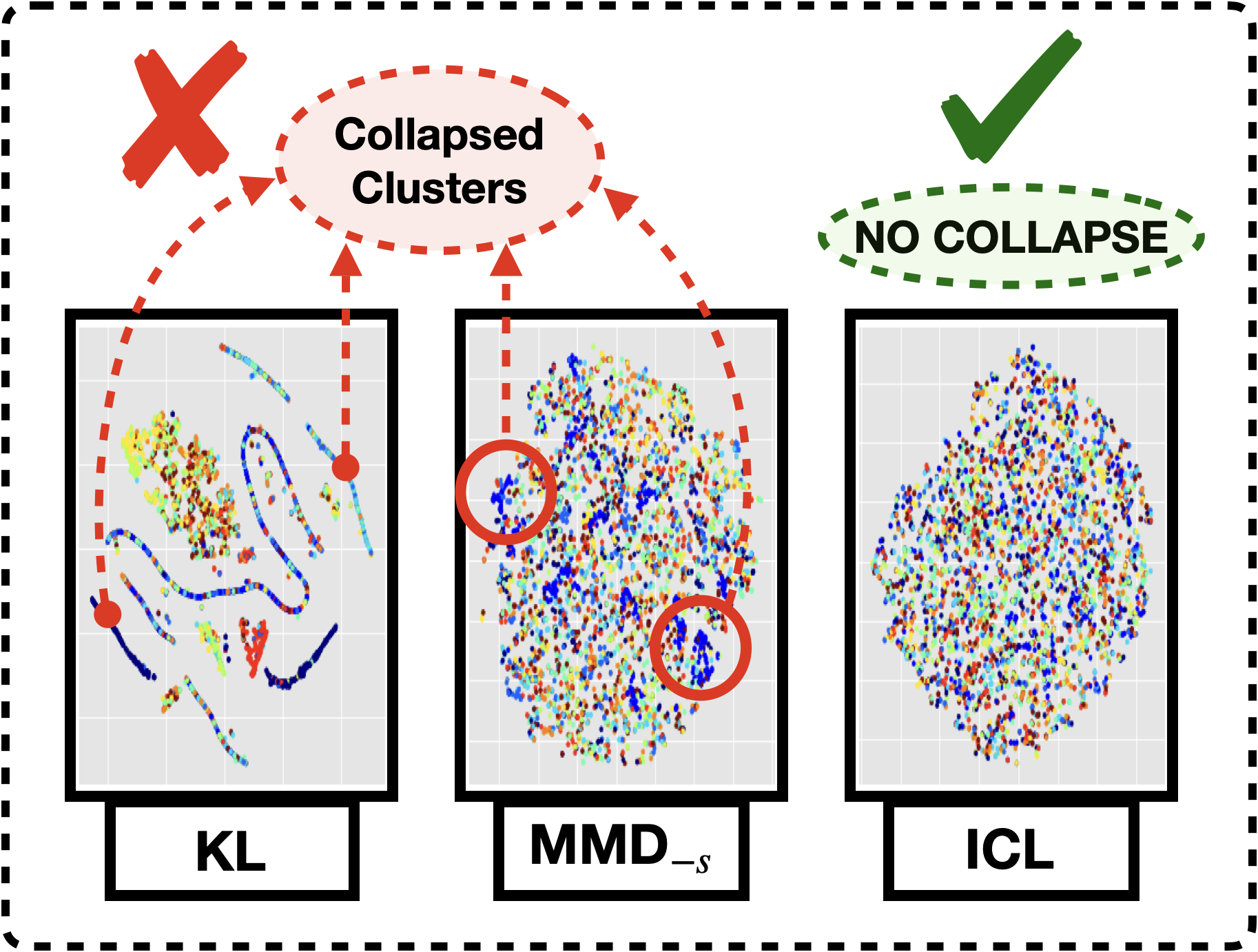}
	\caption{ We plot t-SNE for latent representations of KL, $\mmd_{-s}$ and $\icl$ for MNIST style experiment. Collapsed clusters are observed in the plots of KL, $\mmd_{-s}$, whereas $\icl$ generates a uniform latent space favoring invariance. }
	\label{fig:tsne_mnist}
	
\end{figure}

\subsection{Generative Model Families}
\begin{table*}[t] 
	\footnotesize
	\captionsetup{justification=centering} 
	\setlength\tabcolsep{0pt} 
	\caption*{\textbf{R:} Reconstruction Error, \textbf{P:} Prediction Accuracy, \textbf{A:} Adversarial Invariance Measure\\
		$\uparrow$: Higher Value is preferred, $\downarrow$: Lower Value is preferred}
	\vspace{-1mm}
	\begin{tabular*}{\textwidth}{l @{\extracolsep{\fill}}
			*{10}{S[table-format=1.4]}}
		\midrule\midrule
		& \multicolumn{2}{c}{MNIST} & \multicolumn{2}{c}{Adult} & \multicolumn{2}{c}{German}& \multicolumn{2}{c}{MNIST-ROT}& \multicolumn{2}{c}{ADNI} \\
		\cmidrule{2-3} \cmidrule{4-5} \cmidrule{6-7} \cmidrule{8-9} \cmidrule{10-11}
		& {R $\downarrow$} & {A $\downarrow$} & {P $\uparrow$} & {A $\downarrow$} & {P $\uparrow$} & {A $\downarrow$} & {P $\uparrow$} & {A $\downarrow$}  & {P $\uparrow$} & {A $\downarrow$}\\ 
		\midrule\midrule
		Unregularized & {$12.1 \pm 0.5$} &  {$46 \pm 4$}  &  {$84 \pm 0$}  &  {$84 \pm 0$}  &  {$73 \pm 2$}  &  {$78 \pm 2$}  &  {$96 \pm 0$}  &  {$42 \pm 1$}  &  {$83 \pm 3$}  &  {$55 \pm 5$}  \\
		MI & {$13.2 \pm 0.4$} &  {$50 \pm 3$}  &  {$84 \pm 0$}  &  {$78 \pm 2$}  &  {$70 \pm 0$}  &  {$76 \pm 3$}  &  {$96$}  &  {$38 \pm 1$}  &  {$-$}  &  {$-$}  \\ 
		MMD$_{-s}$& {$15.8 \pm 0.5$} &  {$55 \pm 5$}  &  {$84 \pm 0$}  &  {$82 \pm 0$}  &  {$73 \pm 1$}  &  {$75 \pm 2$}  &  {$96 \pm 0$}  &  {$35 \pm 2$}  &  {$85 \pm 3$}  &  {$49 \pm 3$}  \\ 
		MMD$_f$ & {$15.8 \pm 0.5$} &  {$50 \pm 5$}  &  {$83 \pm 0$}  &  {$80 \pm 0$}  &  {$74 \pm 1$}  &  {$78 \pm 2$}  &  {$96 \pm 0$}  &  {$34 \pm 1$}  &  {$86 \pm 1$}  &  {$57 \pm 6$}  \\ 
		OT & {$14.4 \pm 0.4$} &  {$61 \pm 3$}  &  {$83 \pm 0$}  &  {$78 \pm 1$}  &  {$72 \pm 2$}  &  {$75 \pm 3$}  &  {$-$}  &  {$-$}  &  {$-$}  &  {$-$}  \\ 
		CAI & {$11.8 \pm 0.3$} &  {$48 \pm 9$}  &  {$84 \pm 0$}  &  {$81 \pm 3$}  &  {$73 \pm 1$}  &  {$75 \pm 2$}  &  {$96$}  &  {$38$}  &  {$85 \pm 2$}  &  {$51 \pm 4$}  \\ 
		UAI & {$-$} &  {$-$}  &  {$84 \pm 0$}  &  {$83 \pm 0$}  &  {$73 \pm 2$}  &  {$75 \pm 3$}  &  {$98$}  &  {$34$}  &  {$84 \pm 3$}  &  {$49 \pm 7$}  \\
		\textbf{ICL} (Ours) & {$16.6 \pm 0.1$} &  {$\boldsymbol{32 \pm 0}$}  &  {$83 \pm 0$}  &  {$\boldsymbol{75 \pm 2}$}  &  {$75 \pm 2$}  &  {$\boldsymbol{75 \pm 2}$}  &  {$96 \pm 0$}  &  {$\boldsymbol{33 \pm 1}$}  &  {$84 \pm 3$}  &  {$\boldsymbol{46 \pm 7}$}  \\ 
		\midrule\midrule
	\end{tabular*} 
	\captionsetup{justification=justified} 
	\caption{\label{tab:results} ICL achieves a better Adversarial Invariance Measure (A) relative to the baselines as indicated in bold. The Prediction Accuracy (P) / Reconstruction Error (R) for all the methods are comparable. We include the following baselines: (a) Unregularized setup, (b) MI \cite{moyer2018invariant}, (c) $\mmd_{-s}$, (d) $\mmd_{f}$, based on \cite{li2014learning}, (e) OT based regularizer (f) CAI \cite{xie2017controllable} (g) UAI \cite{NIPS2018_7756}. The symbol $(-)$ indicates that the baseline was not applicable for the dataset. } 
\end{table*} 

First, we apply ICL to the family of generative models based on VAEs.
Primarily we work with the setups \eqref{eq:vae_obj} and \eqref{eq:vib_obj}.

\textbf{Learning style information in MNIST Dataset.} 
We consider the problem of learning representations 
that preserve only the style information of the digit (e.g.,
slant of digit, thickness of stroke etc.) while being invariant to the digit label. 
We use the VAE setup from \eqref{eq:vae_obj}. {\bf \textit{Results.}}
Table \ref{tab:results} shows that ICL provides the best adversarial invariance amongst
all the methods. 
Except for CAI, the invariance provided by other methods are significantly 
worse in comparison to the unregularized case.
We reviewed this behavior in Section \ref{section:introduction} and
suspect that it is due to a high similarity between input examples of the same digit.
We also show the effect of large regularizer weight 
to explain this behavior. 
t-SNE plots in Figure \ref{fig:tsne_mnist} show clusters
and collapse of the latent space for KL and $\mmd_{-s}$. In comparison, ICL
has a uniform latent space,
which partly explains why it provides better invariance.

\textbf{Learning invariant representation for Fairness Datasets.} 
Next, we consider the problem of learning representations that are invariant to the extraneous variable 
which may be ``protected'' in fair classification models. 
The intuition is that such invariant features should help downstream fair algorithms that
depend on these representations. 
We use the Adult and German datasets \cite{Dua:2019} for this task.
In Adult, the task is to predict if a person has over $\$50,000$ in savings, 
and the extraneous variable is Gender.
In German, the task is to predict if a person has a good credit score
and the extraneous variable is Age (binarized).
We use the preprocessing from \cite{moyer2018invariant}, and follow the VIB \eqref{eq:vib_obj} setup.
{\bf \textit{Results.}}  For Adult (Table \ref{tab:results}), all methods show 
comparable prediction accuracy and
ICL gives the best invariance. For German, ICL is amongst the methods
with best adversarial invariance (Table \ref{tab:results}) and provides best predictive accuracy.
Accuracy higher than unregularized case suggests that removal of Age 
assists the downstream task.

\subsection{Discriminative Model families}
Next we apply ICL to discriminative models \eqref{eq:sup_obj} which are parameterized using a deep neural network such as ResNet18 \cite{he2016identity}.
We seek to make representations at an 
internal layer of the network invariant, and 
so some VAE based baselines are not directly applicable.

\textbf{Invariance w.r.t. continuous extraneous attribute for Adult Dataset.}
For Adult dataset, we evaluate ICL in the context of \textit{age}, a
continuous $c$ variable. {\bf \textit{Results.}} In Table \ref{tab:results_adult}, we see that ICL 
provides a significantly better invariance in comparison to the baselines.
Since continuous attributes are common in the fairness literature
as well as in the context of applications in scientific disciplines,
we believe this experiment shows the viability of ICL's use in this setting.
\begin{table}[!b] 
	\footnotesize
	\captionsetup{justification=centering} 
	\setlength\tabcolsep{0pt} 
	\begin{tabular*}{\columnwidth}{l @{\extracolsep{\fill}}
			*{10}{S[table-format=1.4]}}
		\midrule\midrule
		Dataset: Adult with Age & {P $\uparrow$} & {A$_{\text{MSE}}$ $\uparrow$} \\ 
		\midrule\midrule
		Unregularized & {$83 \pm 0$} &  {$112 \pm 1$}  \\
		CAI \cite{xie2017controllable} & {$82 \pm 2$} &  {$129 \pm 10$}   \\ 
		UAI \cite{NIPS2018_7756} & {$84 \pm 0$} &  {$114 \pm 2$}   \\
		\textbf{ICL} (Ours) & {$83 \pm 0$} &  {$\boldsymbol{161 \pm 15}$} \\ 
		\midrule\midrule
	\end{tabular*} 
	\captionsetup{justification=justified} 
	\caption{\label{tab:results_adult} We study the continuous extraneous variable setting with the Adult dataset and \textit{Age} as the extraneous attribute. We find that  ICL attains a better Adversarial Invariance Measure (A$_{\text{MSE}}$$\uparrow$) compared to the baselines applicable in this setting.}
\end{table} 

\textbf{Rotation invariance for MNIST-ROT.} {This is a variant on MNIST dataset from \cite{NIPS2018_7756} where each digit 
	is randomly rotated by an angle $\in \{0, \pm 22.5^\circ, \pm 45^\circ\}$. 
	The task is to achieve invariance wrt rotation for predicting the digit label. {\bf \textit{Results.}} In Table \ref{tab:results}, we see that while being comparable
	in predictive accuracy, ICL provides the best adversarial
	invariance against rotation.}

\textbf{Predicting disease status while controlling for scanner confounds (ADNI dataset (adni.loni.usc.edu)).} We finally show the effectiveness
of ICL for predicting, using brain imaging data, whether an individual
has Alzheimer's disease (AD) or is a healthy control subject (CN). 
Our pre-processed dataset consists of about $449$ brain MRI scans of patients --
of note here is that because the acquisitions are performed at different
sites, the scanner manufacturers are different (e.g., Siemens, GE) \cite{giannelli2010dependence}.
While the pulse sequences for the scans are standardized, because
of differences in the magnetic coils and other factors, it is
not realistic for the images to be completely harmonized. If a handful
of coarse region of interest (ROI)
summaries are obtained from the images via some pre-processing methods (such as
Freesurfer),
one may expect some immunity to scanner specific artifacts. But if the goal
is to maximize performance using whole brain images, it becomes
difficult to discourage an off-the-shelf CNN model from picking up scanner specific
artifacts, especially if the demographics of the subjects are not perfectly
matched across sites. 
Here, we use the imaging protocol (site/scanner) as the categorical variable
we wish to control for. 
While more specialized models can be used if desired to further improve
performance, 
we trained a simple ResNet-18 based model and
use the output of the last hidden layer as the latent representation. The response
variable was disease status: AD or CN. Since the dataset is small, 
the results are reported over five random training validation split.
{\bf \textit{Results.}}  We find that for this challenging setting,
ICL gives the best adversarial invariance (Table \ref{tab:results}) while also providing better predictive accuracy
than the unregularized model.

{{\bf Discussion on ICL's use for downstream tasks:}
	Our experiments show that ICL is effective in preventing an adversarial module from identifying the extraneous attribute from the latent representations. This would prevent the downstream models from using these extraneous features for prediction. These representations appear to be beneficial for use within fair algorithms. For some of our experiments, we observe that invariance leads to improved prediction accuracy of 
	the downstream task. We also provide a real world application where invariant representations help in
	pooling data from multiple sites, relevant in scientific studies.
}

\section{Conclusions}

Whether for compliance with legislative policies that forbid
preferential treatment (positive or negative) based on
protected attributes or to derive some level of immunity
to systematic variations when pooling data in a large
observational study spanning participating institutions,
it is clear that the need for invariant representations
within a sub-class of problems in machine learning 
will continue to grow and be broadly adopted. 
The form of ICL described here exhibits a number of
desirable properties and empirical behavior in scenarios/datasets that have been
described in the literature. While contrastive
losses are not new, recent results shed light
on when one may be able to characterize their
performance provably. As this literature continues to grow,
at least some of the findings will translate to and help
inform additional invariance properties afforded by ICL
and its variants.

\section{Broader Impact}

The general idea of invariant representations is closely tied to ongoing research on fair algorithms.
In that sense, ICL and other measures for invariance can enable the 
design of methods with a more desirable behavior, if the protected variables are
appropriately controlled for. Such strategies 
can also facilitate pooling of data from multiple sites, and help
answer important scientific questions that may not be possible to answer
with small sized datasets. Controlling
for undesirable observed variables
will be an important consideration in a number of biomedical
applications where deep learning models are getting increasingly
adopted.

\section{Acknowledgments}
        {The authors are grateful to Eric Huang for help and suggestions. Research
          supported by NIH R01 AG062336, NSF CAREER RI\#1252725, NSF CCF \#1918211,
  NIH RF1 AG059312, NIH RF1 AG05986901 and UW CPCP (U54 AI117924). Sathya Ravi was also supported by UIC-ICR
start-up funds. Correspondence should be directed to Ravi or Singh.}


\section{Appendix}


\begin{definition}
	Let $p, q$ be two distributions and $w(x,y)$ be the interaction energy potential. Then the distributional interaction energy functional between distributions $p,q$ is defined as 
	\begin{equation}
	R_w(p, q) = \underset{{\substack{z \sim p \\ z' \sim p} }}{\mathbb{E}}w(z,z') + \underset{{\substack{z \sim q \\ z' \sim q} }}{\mathbb{E}} w(z,z') + 2\underset{{\substack{z \sim p \\ z' \sim q} }}{\mathbb{E}} w(z,z')
	\label{eq:rw_def}
	\end{equation}
	where the potential $w(x,y)$ is chosen suitably for different applications.
\end{definition}
See equation 1.1 in \citep{carrillo2012gradient}) for reference to interaction energy functional.

\subsection{Proof of Lemma 1}
\begin{proof}
	Recall the definition of $\icl(z,c)$ from \eqref{eq:icl_joint}.
	For binary extraneous variable $c$,  we have $\mathcal{N}_{\delta}(c) = \{c\}$. 
	Using this to simplify $\mathbbm{1}(c' \in \mathcal{N}_\delta(c)) = \mathbbm{1}_{(c = c')}$ 
	and $\mathbbm{1}(c' \notin \mathcal{N}_\delta(c)) = \mathbbm{1}_{(c \ne c')}$ and plugging in 
	\eqref{eq:icl_joint}, we obtain
	\begin{equation}
	\begin{split}
	\icl(z, c) = \underset{{\substack{(z,c) \sim p(z,c) \\ (z',c') \sim p(z', c')} }}{\mathbb{E}}\mathbbm{1}_{(c = c')} f(z,z') +\mathbbm{1}_{(c \ne c')} s(z,z')
	\end{split}		
	\label{eq:icl_pf_1}
	\end{equation}	
	Next we introduce the functions $g$ and $w$ used in the Lemma
	\begin{equation}
	g(z,z') = (f(z,z') - s(z,z'))/8
	\label{eq:g_func}
	\end{equation}
	\begin{equation}
	w(z,z') = (f(z,z') + s(z,z'))/8
	\label{eq:w_func}
	\end{equation}
	Using \eqref{eq:g_func} and \eqref{eq:w_func} in \eqref{eq:icl_pf_1} gives us 
	\begin{equation}
	\begin{split}
	\frac{\icl(z, c)}{4}  &= \underset{{\substack{(z,c) \sim p(z,c) \\ (z',c') \sim p(z', c')} }}{\mathbb{E}}\bigg[\bigg(\mathbbm{1}_{(c = c')}-\mathbbm{1}_{(c \ne c')}\bigg) g(z,z')\bigg] \\
	&\quad \quad + \underset{{\substack{(z,c) \sim p(z,c) \\ (z',c') \sim p(z', c')} }}{\mathbb{E}}[w(z,z')]
	\end{split}		
	\label{eq:icl_pf_2}
	\end{equation}
	Using law of total expectation we write \eqref{eq:icl_pf_2} as
	\begin{equation}
	\begin{split}
	\frac{\icl(z, c)}{4} &= \underset{{c,c' \sim p(c) }}{\mathbb{E}}\bigg[\big(\mathbbm{1}_{(c = c')}-\mathbbm{1}_{(c \ne c')}\big) \underset{{\substack{z \sim p(z|c) \\ z' \sim p(z| c')} }}{\mathbb{E}} [g(z,z')]\bigg] \\ 
	& \quad \quad \quad \quad + \underset{{c,c' \sim p(c) }}{\mathbb{E}}\bigg[  \underset{{\substack{z \sim p(z|c) \\ z' \sim p(z| c')} }}{\mathbb{E}} [w(z,z')] \bigg]\\
	\end{split}		
	\end{equation}
	Since $p(c=0) = 1/2$, and using $p_0$ and $p_1$ to denote the conditional distributions $p(z|c=0)$ and $p(z|c=1)$ respectively, the expectation is expanded to get
	\begin{equation}
	\begin{split}
	\icl(z, c) &=  \underbrace{\underset{{\substack{z \sim p_0 \\ z' \sim p_0} }}{\mathbb{E}} g(z,z') + \underset{{\substack{z \sim p_1 \\ z' \sim p_1} }}{\mathbb{E}}g(z,z') - 2\underset{{\substack{z \sim p_0 \\ z' \sim p_1} }}{\mathbb{E}} g(z,z')}_{\mmd_g} \\
	& + \underbrace{\underset{{\substack{z \sim p_0 \\ z' \sim p_0} }}{\mathbb{E}} w(z,z') + \underset{{\substack{z \sim p_1 \\ z' \sim p_1} }}{\mathbb{E}} w(z,z') + 2\underset{{\substack{z \sim p_0 \\ z' \sim p_1} }}{\mathbb{E}} w(z,z')}_{R_w}\\
	&= \mmd_g(p_0, p_1) + R_w(p_0, p_1)
	\end{split}		
	\end{equation}
	where $R_w(p, q)$ is defined in \eqref{eq:rw_def}.
	
\end{proof}

\subsection{Generalization of Lemma 1}
We next show that Lemma \ref{lem:icl-r-mmd} can be generalized to multi-class setting when $c$ is (discrete) uniformly distributed.
\begin{lemma} \label{lem:icl-r-mmd-gen} Assume that the extraneous variable $c$ is discrete with $c\in \{1,...,m\}$ and is uniformly distributed, $p(c=i)=1/m$. Then there exist a positive definite kernel $g$ and an interaction energy functional $R_w$  (see \eqref{eq:rw_def}) such that the following equality holds:
	\begin{align}
	\icl(z, c) = \sum_{ \substack{i,j \in \{1,...,m\}, \\ i < j} }\mmd_{g}(p_i, p_j) + R_w(p_i, p_j),\label{eq:icl-r-mmd-gen}
	\end{align}
	where $p_i$ denote the conditional distributions $p(z|c=i)$.
\end{lemma}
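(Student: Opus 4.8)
The plan is to mirror the proof of Lemma~\ref{lem:icl-r-mmd}, replacing the two-point expansion over $c\in\{0,1\}$ with an $m$-point expansion. First I would note that, exactly as in the binary case, for a discrete $c$ and a suitably small radius $\delta$ one has $\mathcal{N}_\delta(c)=\{c\}$, so that $\mathbbm{1}(c'\in\mathcal{N}_\delta(c))=\mathbbm{1}_{(c=c')}$ and $\mathbbm{1}(c'\notin\mathcal{N}_\delta(c))=\mathbbm{1}_{(c\ne c')}$; substituting into \eqref{eq:icl_joint} gives $\icl(z,c)=\mathbb{E}[\mathbbm{1}_{(c=c')}f(z,z')+\mathbbm{1}_{(c\ne c')}s(z,z')]$. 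Applying the law of total expectation by conditioning on the pair $(c,c')$ and using uniformity $p(c=i)=1/m$ turns this into a weighted double sum, $\icl(z,c)=\tfrac{1}{m^2}\big(\sum_{i}\mathbb{E}_{z,z'\sim p_i}f(z,z')+\sum_{i\ne j}\mathbb{E}_{z\sim p_i,z'\sim p_j}s(z,z')\big)$, i.e.\ a diagonal part governed by $f$ and an off-diagonal part governed by $s$.

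Next I would posit the kernel $g$ and potential $w$. Motivated by the $m=2$ case (where $g=(f-s)/8$ and $w=(f+s)/8$) I would try $g(z,z')=\tfrac{1}{2m^2}\big(\tfrac{f(z,z')}{m-1}-s(z,z')\big)$ and $w(z,z')=\tfrac{1}{2m^2}\big(\tfrac{f(z,z')}{m-1}+s(z,z')\big)$, so that $g+w=\tfrac{f}{m^2(m-1)}$ and $w-g=\tfrac{s}{m^2}$. Expanding $\sum_{i<j}\big(\mmd_g(p_i,p_j)+R_w(p_i,p_j)\big)$ via the definitions \eqref{eq:mmd} and \eqref{eq:rw_def}: the ``self'' terms $\mathbb{E}_{z,z'\sim p_k}[\,\cdot\,]$ of $\mmd_g$ and $R_w$ add up to $\mathbb{E}_{z,z'\sim p_k}[g+w]$, and since each index $k$ appears in exactly $m-1$ of the pairs $\{i<j\}$ these contribute $(m-1)\sum_k\mathbb{E}_{z,z'\sim p_k}[g+w]$; the ``cross'' terms add up to $2\,\mathbb{E}_{z\sim p_i,z'\sim p_j}[w-g]$ per pair, and because $f,s$ are symmetric in their arguments $2\sum_{i<j}\mathbb{E}_{z\sim p_i,z'\sim p_j}[w-g]=\sum_{i\ne j}\mathbb{E}_{z\sim p_i,z'\sim p_j}[w-g]$. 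Matching $(m-1)(g+w)=f/m^2$ and $w-g=s/m^2$ against the expression from the previous paragraph yields \eqref{eq:icl-r-mmd-gen}.

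Finally I would check validity: $f(z,z')=\exp(\alpha-\beta d_Z(z,z'))$ is, up to the positive factor $e^\alpha$, a Laplacian kernel and hence positive definite, while $s(z,z')=d_Z^2(z,z')$ is conditionally negative definite, so the positive combination $g=\tfrac{1}{2m^2}\big(\tfrac{f}{m-1}-s\big)$ is conditionally positive definite---exactly as in Lemma~\ref{lem:icl-r-mmd}---which makes $\mmd_g$ a valid divergence, and $w$ is a legitimate interaction energy potential in the sense of \eqref{eq:rw_def}.

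I expect the only real obstacle to be bookkeeping: fixing the normalization constants and tracking how many of the $\binom{m}{2}$ pairs each conditional $p_k$ participates in, together with folding the $2\sum_{i<j}$ cross terms into a single $\sum_{i\ne j}$ using the symmetry of $f$ and $s$. There is no new analytic ingredient beyond the binary proof.
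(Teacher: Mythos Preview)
Your proposal is correct and follows exactly the paper's approach: reduce to the discrete-neighborhood form, condition on $(c,c')$ using uniformity, split into diagonal ($f$) and off-diagonal ($s$) parts, and then read off $g$ and $w$ so that the pairwise sum $\sum_{i<j}(\mmd_g+R_w)$ reproduces those parts. Your bookkeeping---each $p_k$ appearing in $m-1$ pairs for the self terms, and $2\sum_{i<j}=\sum_{i\ne j}$ for the symmetric cross terms---is precisely what the paper's ``rearranging'' step amounts to.

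One small discrepancy worth flagging: the paper states $g=\tfrac{1}{m^2}\big(\tfrac{f}{m-1}-s\big)$ and $w=\tfrac{1}{m^2}\big(\tfrac{f}{m-1}+s\big)$, whereas you derive $g=\tfrac{1}{2m^2}\big(\tfrac{f}{m-1}-s\big)$ and $w=\tfrac{1}{2m^2}\big(\tfrac{f}{m-1}+s\big)$. Your constants are the ones that actually satisfy $(m-1)(g+w)=f/m^2$ and $w-g=s/m^2$, and they specialize to the binary values $g=(f-s)/8$, $w=(f+s)/8$ of Lemma~\ref{lem:icl-r-mmd} when $m=2$; the paper's printed constants are off by a factor of $2$ (they would give $2\,\icl$ on the right-hand side and do not reduce to the binary case). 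So keep your normalization. Your remark that $g$ is only \emph{conditionally} positive definite (as in Lemma~\ref{lem:icl-r-mmd}) rather than positive definite is also the accurate statement.
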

\begin{proof}
	The proof proceeds on similar lines as the proof of Lemma \ref{lem:icl-r-mmd}. We introduce new functions $g$ and $w$ for the multi-class setting as 
	\begin{equation}
	g(z,z') = \frac{1}{m^2} \bigg(\frac{f(z,z')}{m-1} - s(z,z')\bigg)
	\label{eq:g_func_gen}
	\end{equation}
	\begin{equation}
	w(z,z') = \frac{1}{m^2} \bigg(\frac{f(z,z')}{m-1} + s(z,z')\bigg)
	\label{eq:w_func_gen}
	\end{equation}
	Using law of total expectation we write \eqref{eq:icl_pf_1}
	\begin{equation}
	\begin{split}
	&\icl(z,c) = \underset{{c,c' \sim p(c)}}{\mathbb{E}} \bigg[\mathbbm{1}_{(c = c')} \underset{{\substack{ z \sim p(z|c) \\ z' \sim p(z| c') }}}{\mathbb{E}} [f(z,z')] \\
	&\quad \quad \quad \quad \quad \quad  \quad \quad \quad +\mathbbm{1}_{(c \ne c')} \underset{{\substack{ z \sim p(z|c) \\ z' \sim p(z| c') }}}{\mathbb{E}} [s(z,z')] \bigg]\\
	&= \frac{1}{m^2} \sum_{i,j \in \{1,...,m\}} \mathbbm{1}_{(i = j)} \underset{{\substack{ z \sim p_i \\ z' \sim p_j }}}{\mathbb{E}} f(z,z') +\mathbbm{1}_{(i \ne j)} \underset{{\substack{ z \sim p_i \\ z' \sim p_j }}}{\mathbb{E}} s(z,z') \\
	& \qquad \qquad \qquad \qquad \{\text{Expanding the expectation}\}
	\end{split}		
	\label{eq:icl_pf_gen_2}
	\end{equation}
	Using \eqref{eq:g_func_gen}, \eqref{eq:w_func_gen} in above and rearranging gives
	\begin{equation}
	\begin{split}
	&\icl(z, c) = \\
	&\sum_{\substack{i,j \in \{1,...,m\} \\ i < j}} \bigg(\underbrace{ \underset{{\substack{ z \sim p_i \\ z' \sim p_i }}}{\mathbb{E}}g(z,z') + \underset{{\substack{ z \sim p_j \\ z' \sim p_j }}}{\mathbb{E}} g(z,z') -2 \underset{{\substack{ z \sim p_i \\ z' \sim p_i }}}{\mathbb{E}} g(z,z')}_{\mmd_g}  \\ 
	& \qquad\quad + \underbrace{ \underset{{\substack{ z \sim p_i \\ z' \sim p_i }}}{\mathbb{E}} w(z,z') + \underset{{\substack{ z \sim p_j \\ z' \sim p_j }}}{\mathbb{E}} w(z,z') + 2 \underset{{\substack{ z \sim p_i \\ z' \sim p_i }}}{\mathbb{E}}  w(z,z')}_{R_w} \bigg)\\
	&= \sum_{\substack{i,j \in \{1,...,m\} \\ i < j}} \mmd_{g}(p_i, p_j) + R_w(p_i, p_j)
	\end{split}		
	\label{eq:icl_pf_gen_3}
	\end{equation}
\end{proof}

\subsection{Proof of Lemma 2}
\begin{proof}
	
	Define $\mathcal{L}(f) := \mathbb{E}_{(z,c)}[(b(z) - c)^2]$ as the MSE of adversary $b$. Next we introduce imaginary samples $(z',c') \sim p(z,c)$ and have following 
	\begin{equation}
	\begin{split}
	2\mathcal{L}(f) &= 2\underset{{(z,c)\sim p(z,c)}}{\mathbb{E}}[(b(z)-c)^2] \\
	&= \underset{{\substack{(z,c)\sim p(z,c) \\ (z',c')\sim p(z,c)}}}{\mathbb{E}}[(b(z)-c)^2 + (b(z')-c')^2]
	\end{split}
	\end{equation}
	We hide the subscript in expectation to simplify the notation. 
	\begin{equation}
	\begin{split}
	&4\mathcal{L}(f) = 2\mathbb{E}[(b(z)-c)^2 + (b(z')-c')^2]\\
	&\ge \mathbb{E}\big[\big(b(z)-c - b(z')+c'\big)^2\big] \;\;\; \{(a^2 + b^2) \ge {(a-b)^2}/{2} \}\\
	&=  \mathbb{E}\bigg[\bigg((c'-c) - \big(b(z') - b(z)\big)\bigg)^2\bigg] \quad \{\text{Rearranging}\}\\
	&\ge \mathbb{E}\big[\big(|c'-c| - |b(z')-b(z)|\big)^2\big] \;\; \{(a-b)^2 \ge (|a|-|b|)^2\}\\
	&=  \mathbb{E}\big[\big(\mathbbm{1}_{(c' \in \mathcal{N}_\delta(c))}+\mathbbm{1}_{(c' \notin \mathcal{N}_\delta(c))}\big)
	\big(|c'-c| - |b(z')-b(z)|\big)^2\big]\\
	%
	%
	&\ge \mathbb{E}\big[\mathbbm{1}_{(c' \notin \mathcal{N}_\delta(c))}\big(|c'-c| - |b(z')-b(z)|\big)^2\big]\\
	&= \mathbb{E}\big[\big(\mathbbm{1}_{(c' \notin \mathcal{N}_\delta(c))}|c'-c| - \mathbbm{1}_{(c' \notin \mathcal{N}_\delta(c))}|b(z')-b(z)|\big)^2\big]\\
	&\ge \mathbb{E}\big[\big(\mathbbm{1}_{(c' \notin \mathcal{N}_\delta(c))}|c'-c| - \mathbbm{1}_{(c' \notin \mathcal{N}_\delta(c))}|b(z')-b(z)|\big)^2\big] \\
	& \{\text{Using $\mathbb{E}[X^2] \ge \mathbb{E}[X]^2$, $\mathbb{E}[X-Y] = \mathbb{E}[X] - \mathbb{E}[Y]$}\}\\
	&\ge  \mathbb{E}\bigg[\bigg(\mathbbm{1}_{(c' \notin \mathcal{N}_\delta(c))}|c'-c| - \mathbbm{1}_{(c' \notin \mathcal{N}_\delta(c))}|b(z')-b(z)|\bigg)\bigg]^2 \\
	&\ge \bigg( \underbrace{\mathbb{E}[\mathbbm{1}_{(c' \notin \mathcal{N}_\delta(c))}|c'-c|]}_{\rom{1}} -\underbrace{ \mathbb{E}[\mathbbm{1}_{(c' \notin \mathcal{N}_\delta(c))}|b(z')-b(z)|]}_{\rom{2}} \bigg)^2 
	\end{split}
	\label{eq:l2_pf_1}
	\end{equation}
	We lower bound $\rom{1}$ using the encoding $\mathcal{N}_\delta(c) = \{c' : |c-c'| \le \delta\}$ as follows,  
	\begin{equation}
	\begin{split}
	\rom{1} &= \mathbb{E}[\mathbbm{1}(c' \notin \mathcal{N}_\delta(c))|c'-c|] \\
	&\ge \delta \mathbb{E}[\mathbbm{1}(c' \notin \mathcal{N}_\delta(c))]
	= \delta P_{c,c'}(|c-c'| > \delta) 
	= \delta \rho
	\end{split}
	\label{eq:l2_pf_2}
	\end{equation}
	Next we use the fact that $\icl(z,c) < \epsilon$ to obtain following 
	\begin{equation}
	\begin{split}
	&\icl(z,c) < \epsilon \\
	&\implies  \mathbb{E}[\mathbbm{1}_{(c' \in \mathcal{N}_\delta(c))} f(z,z') + \mathbbm{1}_{(c' \notin \mathcal{N}_\delta(c))} s(z,z')] < \epsilon \\
	&\implies \mathbb{E}[ \mathbbm{1}_{(c' \notin \mathcal{N}_\delta(c))} s(z,z')] < \epsilon\\
	&\implies \mathbb{E}[ \mathbbm{1}_{(c' \notin \mathcal{N}_\delta(c))} d^2(z,z')] < \epsilon \;\;  \{\text{As } s(z,z')=d^2(z,z')\}\\
	&\implies \mathbb{E}[ \mathbbm{1}_{(c' \notin \mathcal{N}_\delta(c))} d(z,z')] < \sqrt{\epsilon} \;\;  \{\text{As } \mathbb{E}[X] \le \sqrt{\mathbb{E}[X^2]}\} \label{eq:l2_pf_3}
	\end{split}
	\end{equation}
	Since $b$ is L-lipschitz, $|b(z)-b(z')| \le L \; d(z,z')$, which allows us to upper bound $\rom{2}$ as 
	\begin{equation}
	\begin{split}
	\rom{2} &= \mathbb{E}[\mathbbm{1}_{(c' \notin \mathcal{N}_\delta(c))}|b(z')-b(z)|]\\
	&\le L\; \mathbb{E}[\mathbbm{1}_{(c' \notin \mathcal{N}_\delta(c))} d(z,z')] \quad \{\text{Using lipschitz definition}\}\\
	&< L\sqrt{\epsilon} \quad \{\text{Using \eqref{eq:l2_pf_3}}\} 
	\end{split}
	\label{eq:l2_pf_4}
	\end{equation}
	We choose $\epsilon$ such that $\epsilon < \delta^2\rho^2/L^2$. Note that there exists a $\alpha$ such that $\icl^{\delta}_{\alpha, \beta}(z,c) < \epsilon$ for this choice. 
	This allows us to use the lower and upper bounds of $\rom{1}$ and $\rom{2}$ respectively from \eqref{eq:l2_pf_2} and \eqref{eq:l2_pf_4} in \eqref{eq:l2_pf_1} to give
	\begin{equation}
	\mathcal{L}(f) \ge (\delta \rho - L\sqrt{\epsilon})^2/4
	\end{equation}
\end{proof}

\subsection{Detailed Setup for Applications}
{\bf (a) Details on adversary}. We follow \cite{xie2017controllable} for training the adversary used for reporting invariance. We use a three-layered FC network with batch normalization and train it using Adam.
For the MNIST-ROT experiment, we follow the setup of \cite{NIPS2018_7756}.\\
{\bf (b) Evaluation methodology}. We evaluate the frameworks in terms of
task accuracy/reconstruction error and adversarial invariance on an unseen test set.
The ADNI dataset is very small and hence for this dataset we use five 
fold random training validation splits to report the mean and standard deviation.
For all other experiments, the mean and standard deviation are reported on an
unseen test set for ten random runs, except when
quoting results from previous works.\\
{\bf (c) Hyperparameter selection.} The hyperparameter selection is done on a separate
validation split such that on this set the model achieves the best adversarial 
invariance while the task accuracy remains within 
$5\%$ of the unregularized model for supervised tasks and within
$5$ points of the unregularized model for unsupervised tasks. 
For the baselines,  
we grid search the best regularization weight in powers of ten and select the one
with best invariance on validation set.  
For some of the experiments, we found 
it useful to initialize the regularization weight to a smaller value ($0.01$ times the regularizer weight) and
multiplicatively update it (with factor $1.5$) every epoch till it reaches the best found regularization weight. 
The  same update rule is used for all the baselines. 
\\
{\bf (d) ICL parameters}. For identifying ICL parameters $\alpha$, $\beta$ and $\delta$, we perform
simple grid search in powers of ten and its multiples of two and five. The $\delta$ hyperparameter is only relevant 
for the case of continuous extraneous attribute. For the continuous case, we normalize
the extraneous variable to be in $[0, 1]$ and search the $\delta$ parameter from multiples of $0.05$.


\bibliography{egbib}



\end{document}